\crefname{section}{Sec.}{Secs.}
\Crefname{section}{Section}{Sections}
\Crefname{table}{Table}{Tables}
\crefname{table}{Tab.}{Tabs.}
\begin{document}
\definecolor{darkred}{rgb}{0.7,0.1,0.1}
\definecolor{darkgreen}{rgb}{0.1,0.7,0.1}
\definecolor{cyan}{rgb}{0.7,0.0,0.7}
\definecolor{dblue}{rgb}{0.2,0.2,0.8}
\definecolor{maroon}{rgb}{0.76,.13,.28}
\definecolor{burntorange}{rgb}{0.81,.33,0}
\definecolor{tealblue}{rgb}{0.212,0.459, 0.533}

\definecolor{pp}{rgb}{0.43921569, 0.18823529, 0.62745098}
\definecolor{rr}{rgb}{0.5254902 , 0.00784314, 0.12941176}
\definecolor{bb}{rgb}{0.09019608, 0.23529412, 0.37647059}
\definecolor{yy}{rgb}{0.49803922, 0.3372549 , 0.0}
\definecolor{gg}{rgb}{0.02352941, 0.3372549 , 0.17647059}

\theoremstyle{definition}
\newtheorem{definition}{Definition}[section]
\newtheorem{conjecture}{Conjecture}[section]
\newtheorem{example}{Example}[section]
\newtheorem{theorem}{Theorem}[section]
\newtheorem{lemma}{Lemma}[section]
\newtheorem{assumption}{Assumption}[section]
\newtheorem{proposition}{Proposition}[section]
\newtheorem{corollary}{Corollary}[section]
\newtheorem{claim}{Claim}[section]
\newtheorem{fact}{Fact}[section]
\renewcommand{\qedsymbol}{$\blacksquare$}

\ifdefined\ShowNotes
  \newcommand{\colornote}[3]{{\color{#1}\bf{#2: #3}\normalfont}}
\else
  \newcommand{\colornote}[3]{}
\fi

\newcommand{\eat}[1]{} %
\newcommand{\myparagraph}[1]{{\vspace{0.15cm}\noindent\bf #1}}

\newcommand{\lce}{\ell_{\text{CE}}}

\newcommand{\product}[1]{\langle #1 \rangle}
\newcommand{\rot}[1]{\rotatebox{90}{#1}} 

\title{Understanding and Constructing Latent Modality Structures \\ in Multi-Modal Representation Learning}
\author{Qian Jiang$^1$, ~Changyou Chen$^{2,3}$, ~Han Zhao$^{1,3}$, ~Liqun Chen\thanks{Work done while at Amazon.}, ~Qing Ping$^3$, \\ ~Son Dinh Tran$^3$,  ~Yi Xu$^3$, ~Belinda Zeng$^3$, ~Trishul Chilimbi$^3$\\
$^1$University of Illinois at Urbana-Champaign \qquad $^2$University at Buffalo \qquad
$^3$Amazon \\
{\tt\small qianj3@illinois.edu \qquad lqchen06@outlook.com}\\
{\tt\small \{vchencha, uhanzhao, pingqing, sontran, yxaamzn, zengb, trishulc\}@amazon.com}
}
\maketitle

\begin{abstract}
Contrastive loss has been increasingly used in learning representations from multiple modalities. In the limit, the nature of the contrastive loss encourages modalities to exactly match each other in the latent space. Yet it remains an open question how the modality alignment affects the downstream task performance. In this paper, based on an information-theoretic argument, we first prove that exact modality alignment is sub-optimal in general for downstream prediction tasks. Hence we advocate that the key of better performance lies in meaningful latent modality structures instead of perfect modality alignment. To this end, we propose three general approaches to construct latent modality structures. Specifically, we design 1) a deep feature separation loss for intra-modality regularization; 2) a Brownian-bridge loss for inter-modality regularization; and 3) a geometric consistency loss for both intra- and inter-modality regularization. 
Extensive experiments are conducted on two popular multi-modal representation learning frameworks: the CLIP-based two-tower model and the ALBEF-based fusion model. We test our model on a variety of tasks including zero/few-shot image classification, image-text retrieval, visual question answering, visual reasoning, and visual entailment. Our method achieves consistent improvements over existing methods, demonstrating the effectiveness and generalizability of our proposed approach on latent modality structure regularization.

\end{abstract}
\section{Introduction}
\label{sec:intro}

Vision-language representation learning aims to learn generic representations from images and texts that could benefit multimodal downstream applications. As the two modalities are essentially from different data sources and distributions, how to effectively fuse the two modalities has become an important question. Some work aims to unify the representations of two modalities in one encoder, where the image and text are usually tokenized into sequences ~\cite{Wang2021UFOAU,Wang2022UnifyingAT,Wang2022ImageAA, Wang2021VLMoUV}. Another line of research represents the image and text modality separately with modality-specific encoders and utilizes contrastive learning to align the modalities, achieving state-of-the-art performance on multiple downstream applications \cite{radford2021clip,mokady2021clipcap,shen2021much,jia2021align,li2021albef, jiali2021codis, yang2022vision, Shukor2022EfficientVP,Kwon2022MaskedVA}.

\begin{figure}[t]
    \centering
    \includegraphics[width=\linewidth]{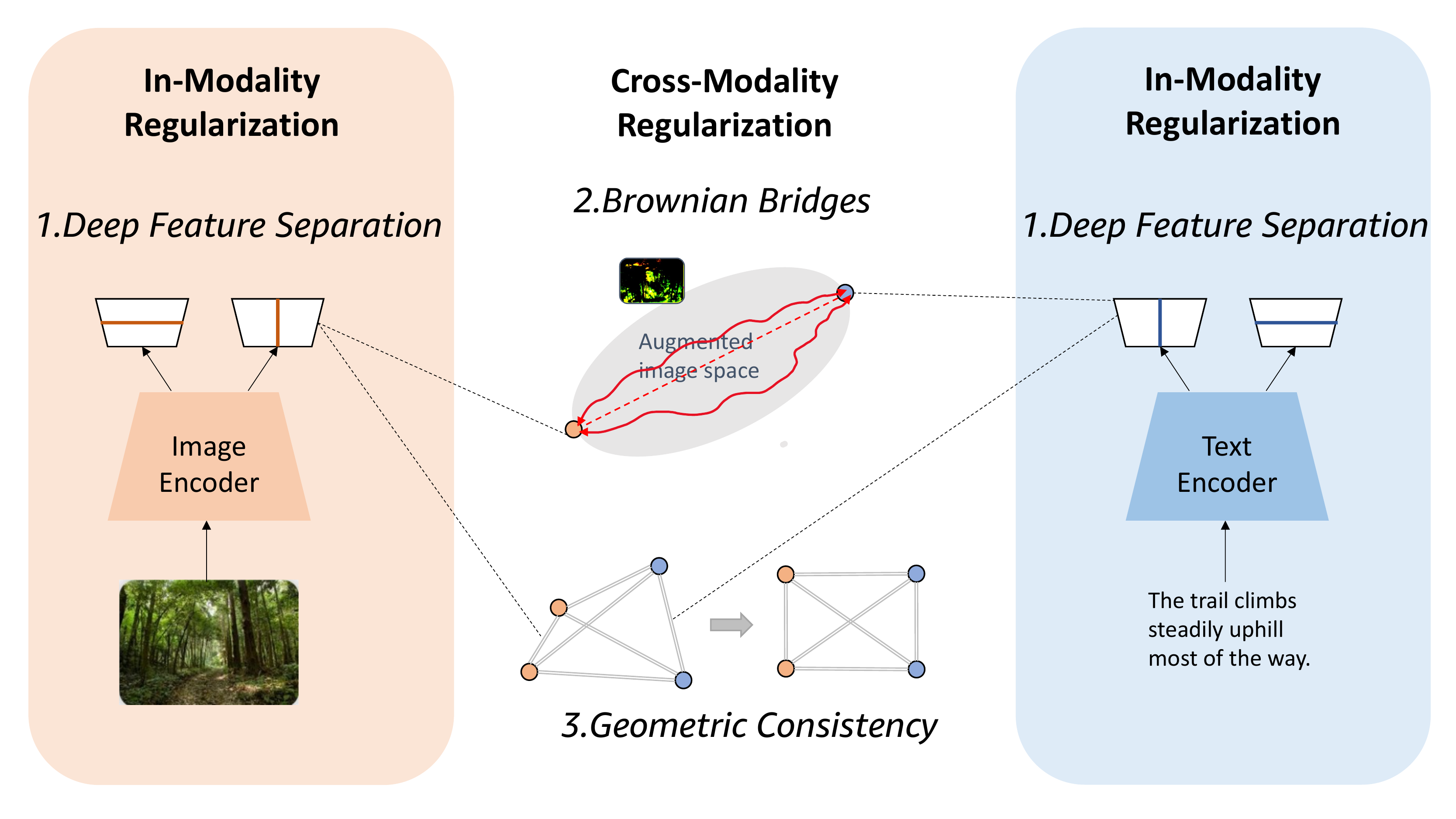}
    \vspace{-0.3cm}
    \caption{Constructing latent modality structures to improve multi-modal representation learning.}
    \label{fig:modality_reg}
    \vspace{-0.5cm}
\end{figure}

Despite the successful empirical practice of contrastive loss in multi-modal representation learning, it remains an open question whether bridging and aligning the two modalities always brings benefits to downstream tasks. One concept closely related to this question is the modality gap~\cite{zhang2020contrastive,radford2021clip,xu2021videoclip,liang2022mind}, where it is defined as the distance between the feature distributions of the two modalities. Modality alignment can be considered as reducing the modality gap. At a first glance, one would conjecture that contrastive loss would reduce the modality gap by pulling positive (paired) image and text data together for better representation. However, a recent study~\cite{liang2022mind} shows evidence that contrastive learning does not always reduce the modality gap. Furthermore, we also show in our empirical analysis that a reduced modality gap does not always guarantee better performance in downstream applications. Motivated by these empirical observations, in this paper we first theoretically study the modality gap problem, by showing that when the modality gap is zero, i.e., exact alignment between the two modalities, the learned representations necessarily have to pay a price for the downstream prediction task, which we term as the \emph{information gap} between the two modalities (Theorem~\ref{thm:gap}). Intuitively, this is because that representations with zero modality gap can only preserve predictive information present in \emph{both} of the modalities at the cost of losing the \emph{modality-specific} information. 

Our theory then suggests that instead of exact modality matching, whether learned representations are meaningful is an important factor in multi-modal representation learning. In particular, we propose to improve on top of contrastive learning with regularizations to construct better latent structures. We consider intra-modality, inter-modality, and intra-inter-modality regularizations. These regularizations are generalizable and can be applied to various vision-language models with modality-specific encoders. Specifically, for intra-modality regularization, motivated by our theoretic result, we propose deep feature separation to encourage the model to preserve both the modality-shared and modality-specific information in different components. For inter-modality regularization, we aim to bridge two modalities with their augmentations. Consequently, we proposed a Brownian bridge loss between the triplet of (text, augmented image, image) to regularize the inter-modality structures. For intra-inter-modality regularization, we introduce the geometric consistency loss that promotes geometric symmetry in the latent space. 
In summary, the main contributions of this paper are:
\begin{itemize}
  \item We conduct empirical and theoretical analysis on understanding the impact of the modality alignment on downstream tasks. We show that a reduced modality gap does not always guarantee better performance, and can instead hurt the performance when the information gap between the two modalities is large (Theorem~\ref{thm:gap}). Combined with the existing theory of contrastive learning, our theory suggests preserving both modality-shared and modality-specific information.
  \item Inspired by our theory, we propose three instrumental regularizations on top of the contrastive loss, {\it i.e.}, the intra-modality, inter-modality, and intra-inter-modality regularizations to improve latent modality structures.
  \item We conduct extensive and comprehensive experiments on various vision-language models to show that the proposed methods consistently improve over the baselines for different model families ({\it e.g.}, CLIP and ALBEF) and for different downstream applications ({\it e.g.}, cross-modality retrieval, VQA, VR and {\it etc}).
\end{itemize}



\section{Related work}
\label{sec:rel}


Most recent works on vision-language representation learning can be categorized based on how information from different modalities is used for joint learning. 
The first category applies unified models~\cite{Wang2021UFOAU,Wang2022UnifyingAT,Wang2022ImageAA, Wang2021VLMoUV} to process both images and texts, where the inputs are usually tokenized into sequences~\cite{Peng2022BEiTVM,Bao2022BEiTBP}. Unified models feature simpler and more universal designs, but typically underperform methods with modality-specific encoders (the second category). 
These methods use separate encoders for images and texts~(\eg CLIP\cite{radford2021clip,mokady2021clipcap,shen2021much}, ALIGN\cite{jia2021align}), and rely on contrastive loss~\cite{oord2018representation,he2020momentum,chen2020simple} to align multiple modalities. These methods have been shown to achieve state-of-the-art (SOTA) performance on image-text retrieval; but  the support is lacking for multi-modality tasks requiring inter-modality interaction, \eg VQA. 
To conquer this problem, most recent approaches use a hybrid fashion where the models have separate encoders for images and texts along with a late-fusion multi-modal encoder \cite{li2021albef, jiali2021codis, yang2022vision, Shukor2022EfficientVP,Kwon2022MaskedVA}. Specifically, image-text matching~(ITM) loss and masked language modeling~(MLM) loss are usually applied for training the fusion encoder.

The methods in the later category utilize separate encoders for different modalities. However, this can lead to the phenomenon that image embeddings and text embeddings reside in different regions of the joint latent space. Such a phenomenon, termed {\em modality gap}, is observed in many multi-modal models~\cite{zhang2020contrastive,radford2021clip,xu2021videoclip}. A recent study~\cite{liang2022mind} shows that the modality gap presents from the initialization and can be preserved during contrastive training. This naturally brings in another variety in multi-modality models -- the latent modality gap and modality structures. CyCLIP~\cite{Goel2022CyCLIPCC} advocates for the benefit of consistency in latent modality structures. Yet to the best of our knowledge, no other prior work has studied the modality gap from a theoretical view. In this work, we show that directly reducing the modality gap does not help in performance gain from both empirical experiments and theoretical analysis. Consequently, we propose to study the impact of latent modality structures, and propose three approaches to obtain more meaningful latent modality structures that can improve downstream applications.

\section{Understanding the Impact of Modality Gap on Downstream Performance}
\label{sec:prelim}
Despite being used extensively as a heuristic in practice~\cite{yang2022vision,zhang2020contrastive,liang2022mind,xu2021videoclip}, it remains an open question whether modality alignment in the feature space through contrastive learning is optimal for downstream performance~\cite{liang2022mind}. In this section, we first formally formulate the modality gap problem, present our empirical evidence on the relationship between the modality gap and the performance of downstream tasks, and then probe into its theoretical underpinning by providing an information-theoretical analysis.

\paragraph{Notation}
Throughout the paper, we will use $X_T$ and $X_V$ to denote the random variables corresponding to the input texts and images, respectively. We shall use $Y$ to denote the target variable in the downstream task of interest. For example, in the context of online shopping, $X_T$ and $X_V$ could be the textual and visual descriptions of a product, and in this case $Y$ is the expected sale of this product. When dealing with data with multi-modalities, we often use modality-specific encoder $g_T$ and $g_V$ to obtain features in the same latent space,
i.e., $Z_T = g_T(X_T)$ and $Z_V = g_V(X_V)$ are the extracted features from textual and visual inputs. In this work, we focus on the setting where inputs from different modalities are paired with each other, meaning that a sample consists of the tuple $(x_T, x_V, y)$ from the underlying joint distribution $p$. The goal of reducing the modality gap in the latent space is then to shrink the statistical distance (e.g., KL-divergence, etc) between $Z_T$ and $Z_V$.

For two random variables $X_T$ and $X_V$, we define $I(X_T;X_V)$ to be the Shannon mutual information between $X_T$ and $X_V$. Similarly, we use $H(Y\mid X_T, X_V)$ to denote the conditional entropy of $Y$ given the two modalities as input. Following common practice, for classification tasks, $\lce(\hat{y}, y)$ is the cross-entropy loss between the prediction $\hat{y}$ and the ground-truth label $y$. One useful fact about the conditional entropy $H(Y\mid X_T, X_V)$ and the cross-entropy loss is the following variational form~\cite{farnia2016minimax,zhao2020fundamental}: $H(Y\mid X_T, X_V) = \inf_{f}\sE_p[\lce(f(X_T, X_V), Y)]$, where the infimum is over all the prediction functions that take both $X_T$ and $X_V$ as input to predict the target $Y$ and the expectation is taken over the joint distribution $p$ of $(X_T, X_V, Y)$.

\subsection{Empirical Analysis on Modality Gap}
Given paired multi-modal data, one natural idea explored in the literature~\cite{yang2022vision,zhang2020contrastive,liang2022mind} is to use contrastive pretraining by treating paired multimodal data as the positive pairs and others as negative pairs. The goal is to align the positive pairs so that they are closer to each other in the feature space while at the same time ensuring the negative pairs to be farther away. More specifically, let $(x_T, x_V, y)$ and $(x'_T, x'_V, y')$ be two tuples sampled from the joint distribution. Then, in order to align the two modalities, $(x_T, x_V)$, $(x'_T, x'_V)$ are used as positive pairs while $(x_T, x'_V)$ and $(x'_T, x_V)$ are constructed as negative pairs.

Based on the contrastive loss principle~\cite[Theorem 1]{wang2020understanding}, a better model should come with smaller modality gaps (better alignment). However, despite being extensively used as a pretraining strategy in practice, it is unclear how the modality alignment affects the downstream tasks of interest. To approach this important question, we first conduct experiments to explore the effect of reducing modality gap on the task of image/text retrieval.

\begin{figure}[t]
  \centering
  \begin{subfigure}{0.32\linewidth}
    \centering
    \includegraphics[width=1.1in]{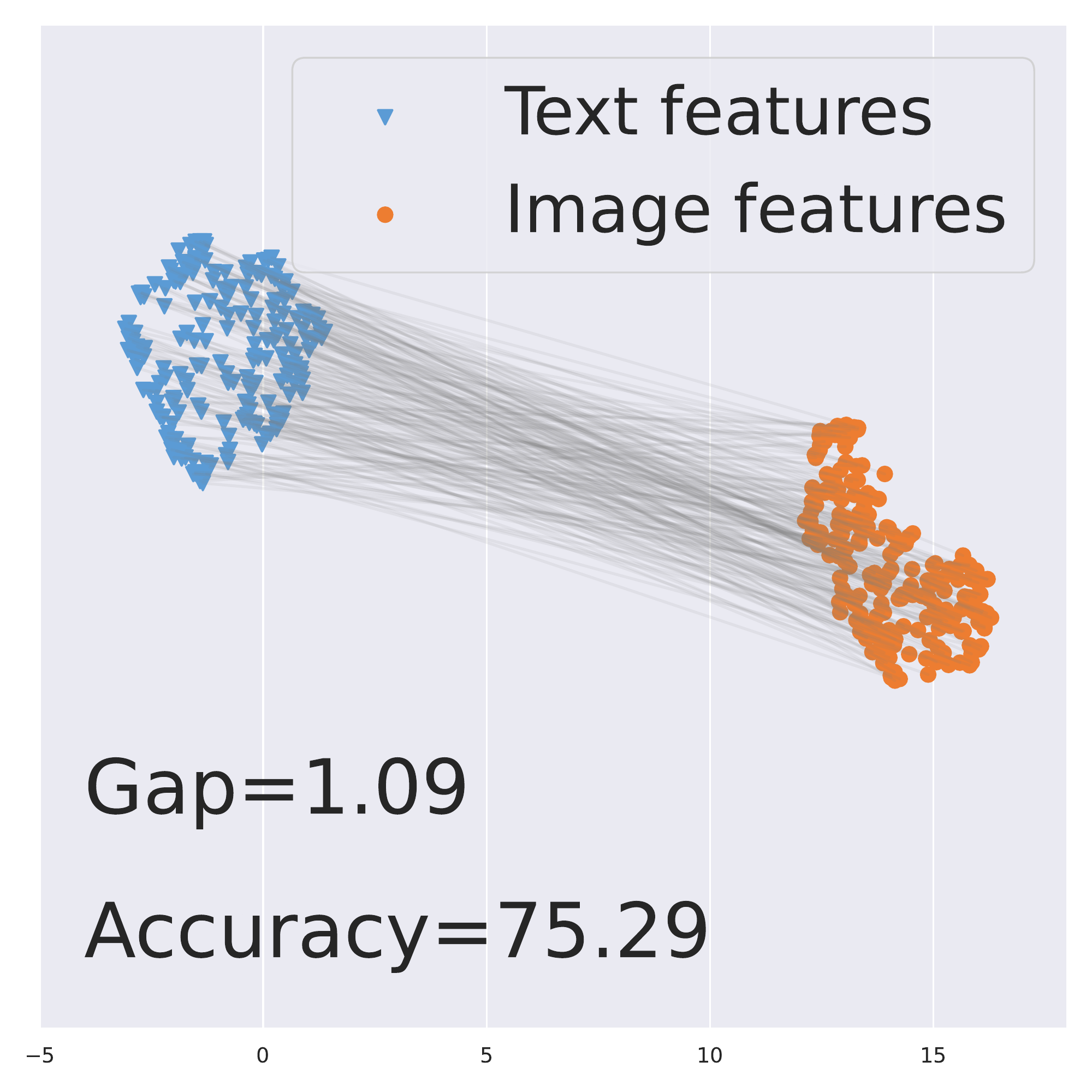}
  \end{subfigure}
  \begin{subfigure}{0.32\linewidth}
    \centering
    \includegraphics[width=1.1in]{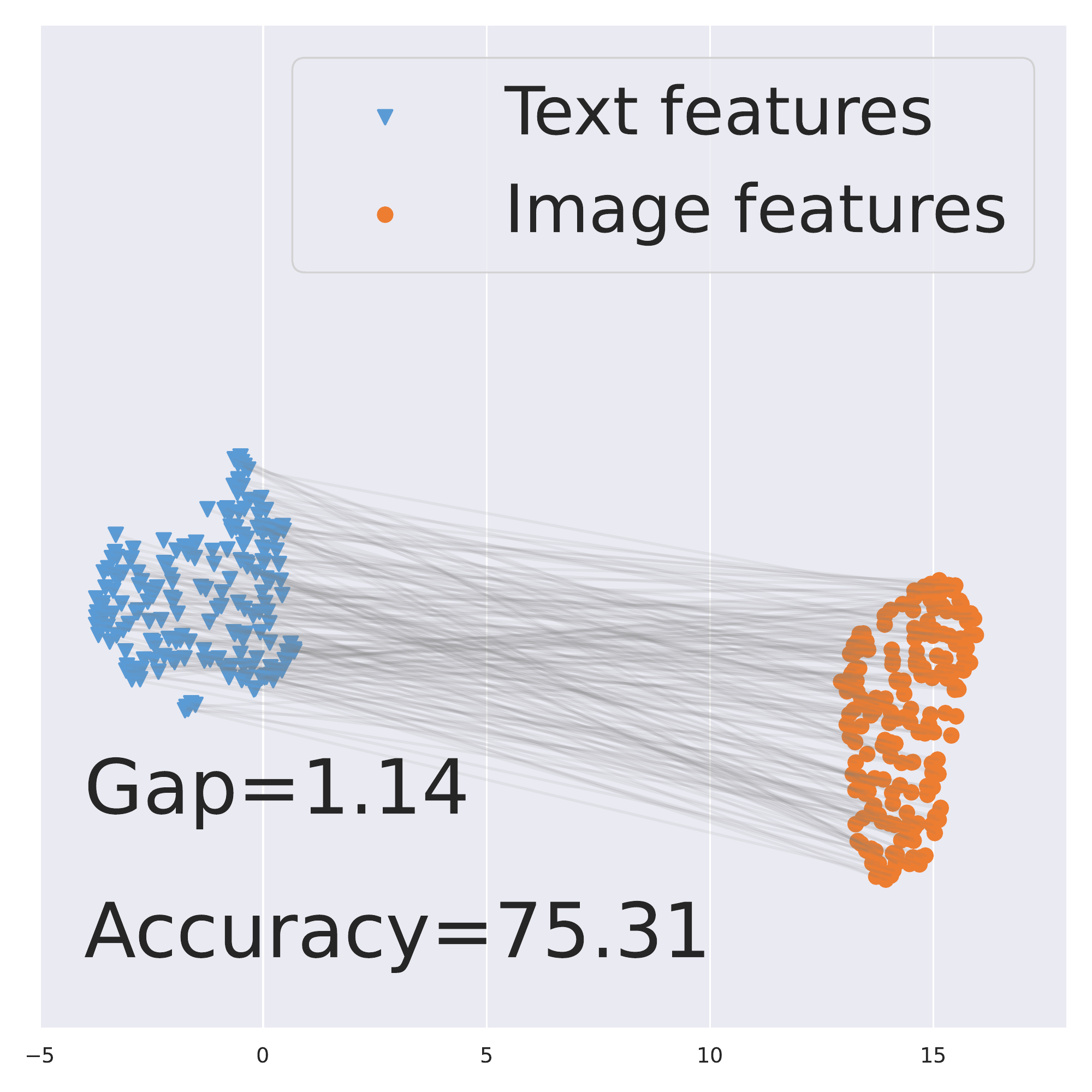}
  \end{subfigure}
   \begin{subfigure}{0.32\linewidth}
    \centering
    \includegraphics[width=1.1in]{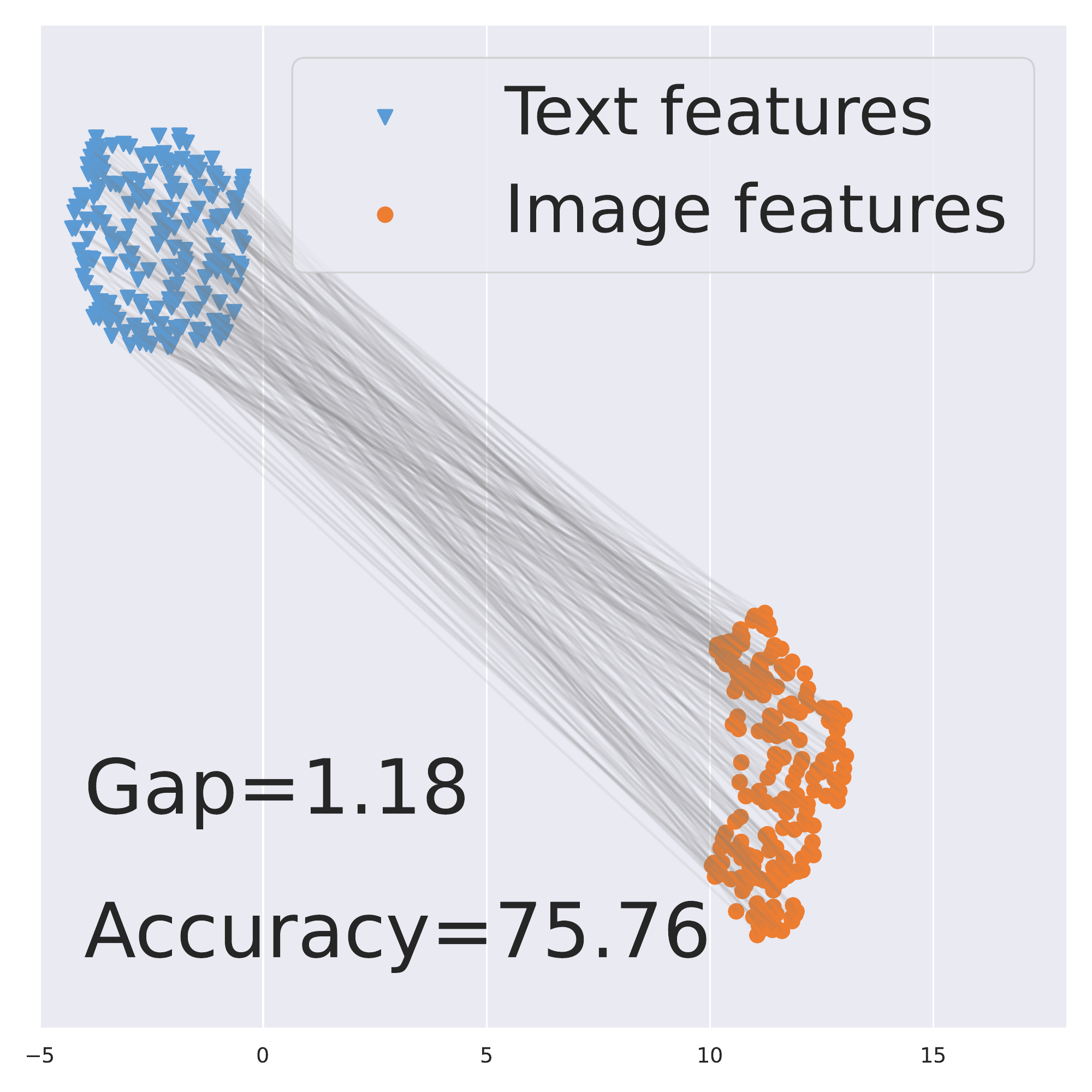}
  \end{subfigure}
  \vspace{-0.3cm}
  \caption{Visualization of the modality gap between text and image features. There is no clear-cut relationship between the gap of these two modalities and the downstream retrieval performance.}
  \label{fig:gap}
  \vspace{-0.5cm}
\end{figure}
We plot the alignment between paired image/text data in the feature space and also compute the average distance between them as the \emph{gap} measure in Fig.~\ref{fig:gap}. We perform pre-training on COCO~\cite{lin2014coco} dataset and evaluate the zero-shot retrieval performance on Flick30K~\cite{Young2014flickr} test set. We optimize an additional alignment loss during training, $\mathcal{L}_{\text{Align}} = 1/\langle Z_T, Z_V\rangle^2$, to reduce the gap between modalities.
We control the gap by adjusting the scale of the alignment loss. From Fig.~\ref{fig:gap}, we can see that the retrieval performance barely changes when changing the gap between two modalities. Note that as we normalized the data in the feature space, the gap difference in the figure is significant.

\subsection{An Information-Theoretic Analysis on Modality Gap}
Inspired by the empirical observation, we conjecture that \emph{reducing the modality gap in feature space does not always lead to better downstream task performance}. Nevertheless, it is instructive to theoretically understand when and in what kind of downstream tasks reducing the modality gap could help. To do so, we first define the \emph{information gap} $\Delta_p:= |I(X_T;Y) - I(X_V;Y)|$ to characterize the gap of utility provided by two modalities towards predicting the target variable $Y$. Note that by definition, the information gap $\Delta_p$ only depends on the joint distribution $p$, i.e., the multimodal prediction problem itself, and is independent of the modality encoders $g_T$ and $g_V$. Hence, it is a constant during the modality learning process. As we shall see shortly, the {\em information gap} will serve as a lower bound of the downstream prediction error if we seek to find features that admit a {\em zero modality gap}. From this perspective, the information gap is the \emph{price} we have to pay for using perfectly aligned features among different modalities. Thus, it well corresponds to the modality gap we are interested in. We can now state our theorem as follows.
\begin{restatable}{theorem}{mgap}
\label{thm:gap}
For a pair of modality encoders $g_T(\cdot)$ and $g_V(\cdot)$, if the multi-modal features $Z_T = g_T(X_T)$ and $Z_V = g_V(X_V)$ are perfectly aligned in the feature space, i.e., $Z_T = Z_V$, then $\inf_{h}\sE_p[\lce(h(Z_T, Z_V), Y)] - \inf_{h'}\sE_p[\lce(h'(X_T, X_V), Y)] \geq \Delta_p$.
\end{restatable}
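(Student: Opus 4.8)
The plan is to reduce both infima to conditional entropies via the variational identity recalled above, and then rewrite the resulting gap as a difference of mutual informations that can be bounded below by $\Delta_p$ through the data processing inequality. First I would dispatch the second (subtracted) term directly: the stated variational form $H(Y\mid X_T, X_V) = \inf_{f}\sE_p[\lce(f(X_T,X_V),Y)]$ immediately gives $\inf_{h'}\sE_p[\lce(h'(X_T,X_V),Y)] = H(Y\mid X_T, X_V)$, with no alignment hypothesis required.

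For the first term I would exploit the assumption $Z_T = Z_V$. Writing $Z := Z_T = Z_V$ for this common feature, any predictor of the form $h(Z_T, Z_V)$ is just a function of the single variable $Z$, so the same variational characterization, now applied to the lone feature $Z$, yields $\inf_h \sE_p[\lce(h(Z_T,Z_V),Y)] = H(Y\mid Z)$. Subtracting, the left-hand side of the theorem equals $H(Y\mid Z) - H(Y\mid X_T, X_V) = I(Y; X_T, X_V) - I(Y; Z)$, where I have substituted $H(Y\mid W) = H(Y) - I(Y;W)$ for $W = Z$ and $W = (X_T,X_V)$ and cancelled the common $H(Y)$.

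The heart of the argument is then to bound $I(Y;Z)$ from above and $I(Y;X_T,X_V)$ from below. The key observation is that $Z$ is \emph{simultaneously} a function of $X_T$ (since $Z = g_T(X_T)$) and of $X_V$ (since $Z = g_V(X_V)$). Applying the data processing inequality along each modality therefore yields $I(Y;Z)\le I(X_T;Y)$ and $I(Y;Z)\le I(X_V;Y)$, so that $I(Y;Z)\le\min\{I(X_T;Y),I(X_V;Y)\}$. Meanwhile, monotonicity of mutual information under adjoining an extra input gives $I(Y;X_T,X_V)\ge\max\{I(X_T;Y),I(X_V;Y)\}$. Subtracting the former bound from the latter,
$$I(Y;X_T,X_V)-I(Y;Z)\ \ge\ |I(X_T;Y)-I(X_V;Y)| = \Delta_p,$$
which is exactly the claimed inequality.

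I expect the main obstacle to be the step where the aligned feature $Z$ must be treated as a function of both inputs at once: the entire lower bound hinges on invoking the data processing inequality twice, once through each modality, so that $I(Y;Z)$ is squeezed below the \emph{smaller} of the two single-modality informations while the joint representation retains the \emph{larger}. The reduction of the first infimum to $H(Y\mid Z)$ also deserves a careful word, since one must justify that optimizing over predictors $h(Z_T,Z_V)$ subject to $Z_T=Z_V$ is genuinely equivalent to optimizing over functions of $Z$ alone; the remainder is routine rewriting of entropies into mutual informations.
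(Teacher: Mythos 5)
Your proof is correct and follows essentially the same route as the paper's: both reduce the two infima to conditional entropies via the variational identity, bound $I(Y;Z_T,Z_V)$ above by $\min\{I(X_T;Y),I(X_V;Y)\}$ using the data processing inequality through each modality, and bound $I(Y;X_T,X_V)$ below by the max. The only cosmetic difference is that you collapse the aligned pair to a single variable $Z$ at the outset, whereas the paper reaches the same conclusion by noting $I(Z_V;Y\mid Z_T)=I(Z_T;Y\mid Z_V)=0$ in the chain-rule decomposition.
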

\noindent\textbf{Remark}~~We discuss some of the implications of the above theorem. At a high level, Theorem~\ref{thm:gap} states that if the information gap $\Delta_p$ between the two modalities is large, then the optimal prediction error we can hope to achieve by using modality-aligned features is at least $\Delta_p$ larger than that we can achieve from the input modalities. In particular, when only one of the modalities contains predictive information w.r.t.\ the downstream target $Y$, enforcing perfect modality alignment could render the learned modality-aligned features $Z_T$ and $Z_V$ uninformative of $Y$, leading to a large downstream prediction error. Intuitively, such a phenomenon will happen because modality alignment enforces the aligned features to only contain predictive information present in both of the input modalities $X_T$ and $X_V$. 

In practice, because of the use of contrastive loss, due to the asymptotic behavior of it~\cite[Theorem 1]{wang2020understanding}, in the limit of infinity amount of data, the contrastive loss will force positive pairs to be perfectly aligned. In the context of multi-modal learning, this means that the assumption $Z_T = Z_V$ of Theorem~\ref{thm:gap} will hold. As a last note, we comment that the requirement of perfect alignment in Theorem~\ref{thm:gap} is not necessary: the lower bound could be extended when the features $Z_T$ and $Z_V$ are only approximately aligned.\footnote{For example, one could use a relax parameter $\epsilon$ to characterize the degree of modality alignment in the feature space. Then, we only need to replace the existing lower bound $\Delta_p$ with $\Delta_p- \epsilon$.} 

Due to space limit, we defer the proof of Theorem~\ref{thm:gap} to~\Cref{sec:app}. In fact, it can be readily seen from the proof in the appendix that we could relax the exact modality alignment condition in Theorem~\ref{thm:gap} even further. In other words, as long as there exists a bijection between $Z_T$ and $Z_V$, then the conditional mutual information satisfies $I(Z_V;Y\mid Z_T) = I(Z_T;Y\mid Z_V) = 0$, so the exact same lower bound in Theorem~\ref{thm:gap} will hold. 
\section{Method}
\label{sec:method}
\begin{figure*}[t]
  \centering
  \begin{subfigure}{0.45\linewidth}
    \centering
    \includegraphics[width=3.2in]{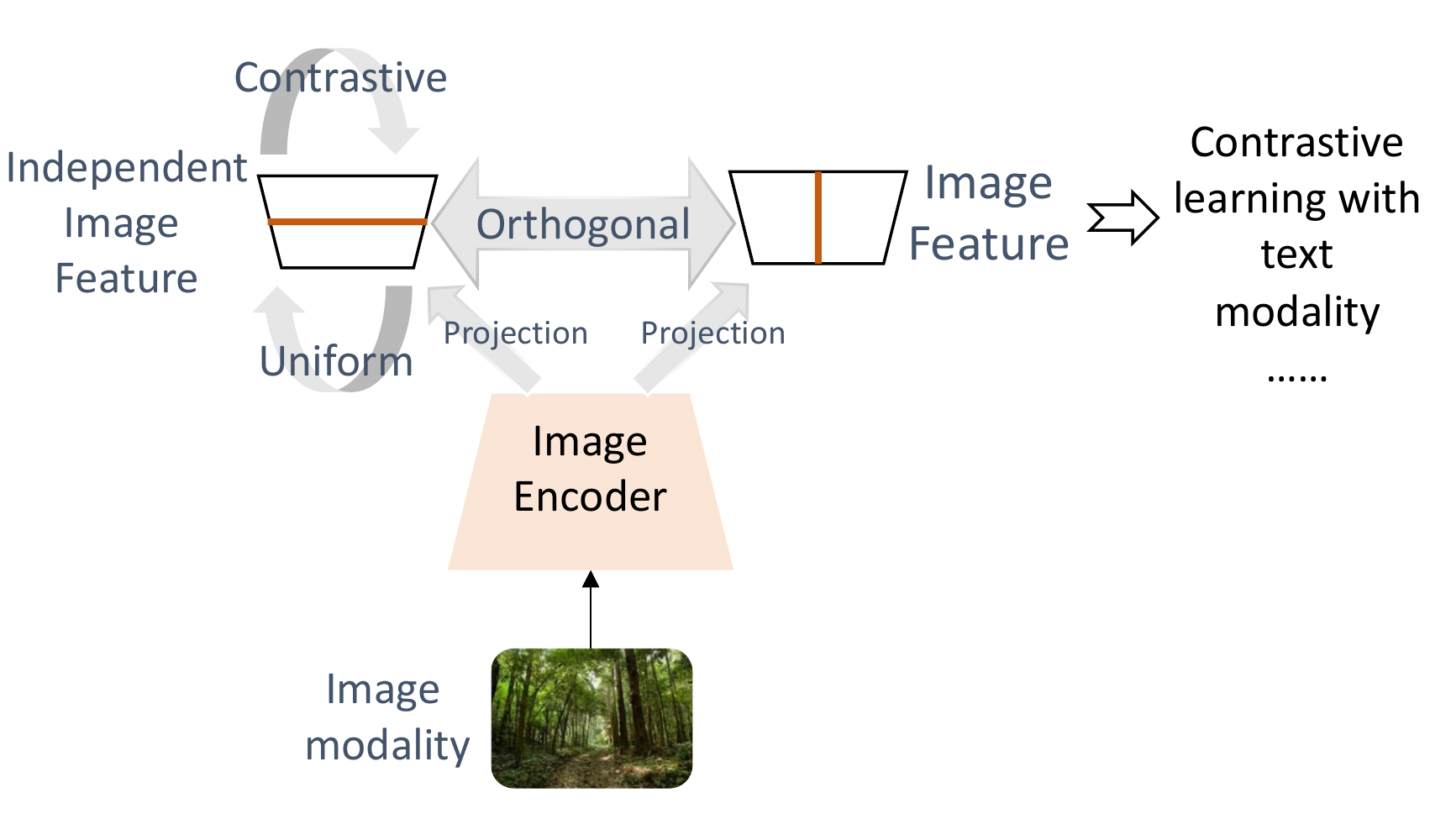}
    \vspace{-0.5cm}
    \caption{Building deep feature separation to preserve {\em modality-independent information}. Independent image features are enforced to be complimentary from original feature~(with orthogonal loss) and store meaningful information~(with contrastive and uniform losses).}
    \label{fig:sep}
  \end{subfigure}
\hspace{0.2cm}
  \begin{subfigure}{0.45\linewidth}
    \centering
    \includegraphics[width=2.5in]{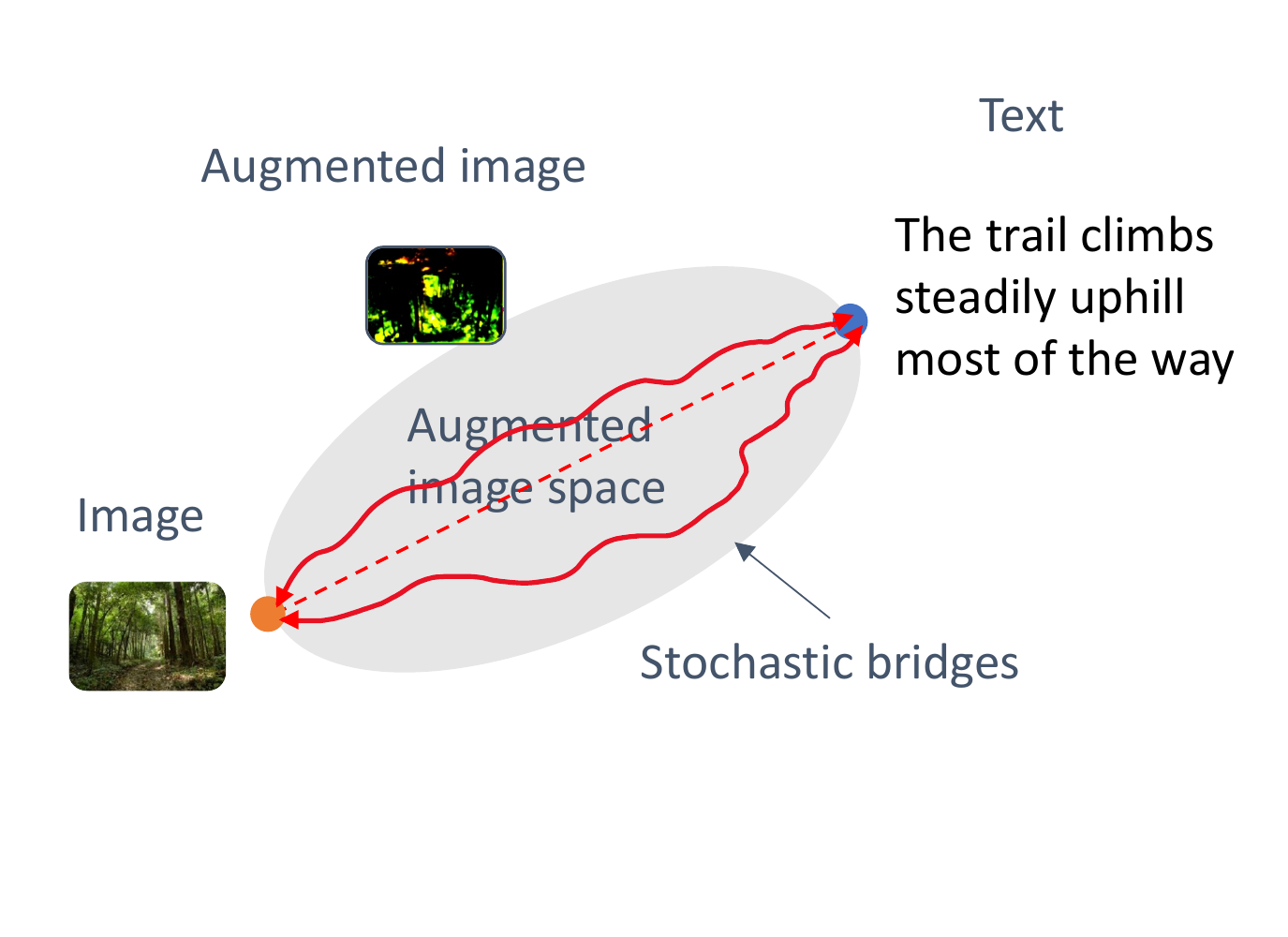}
    \vspace{-0.9cm}
    \caption{Building Brownian bridges between the image and text modalities to regularize inter-modality representations. Each red curve illustrates a stochastic bridge connecting an image-text pair; and the augmented images are enforced to stay on the path, guiding a cross-modality structure to connect the image and text modalities.}
    \label{fig:br}
  \end{subfigure}
  \vspace{0.3cm}
    \begin{subfigure}{0.85\linewidth}
    \centering
    \includegraphics[width=5in]{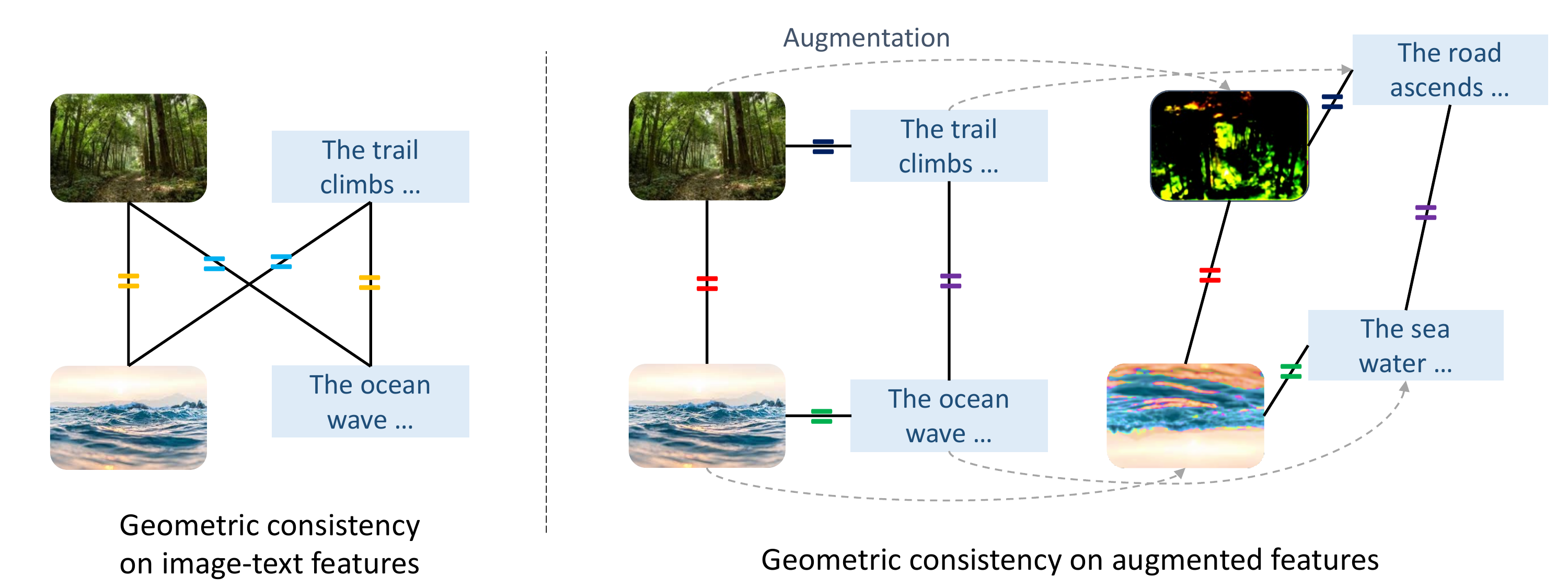}
    \caption{Building geometric consistency between features. Each solid line represents the distance between two features. Same colored $=$ signs indicate that the symmetry is encouraged, \ie the two distances are encouraged to be the same.}
    \label{fig:gc}
  \end{subfigure}
  \vspace{-0.5cm}
  \caption{Illustration of our three designed regularizer for constructing latent feature structure.}
  \label{fig:three_regularizer}
  \vspace{-0.5cm}
\end{figure*}

Motivated by Theorem~\ref{thm:gap}, instead of seeking exact modality matching, in this section we propose to construct meaningful {\em latent modality structures}. They can play an important role in learning generalizable multi-modal representations by preventing pure modality alignment. In the following, we propose three designs from different perspectives to construct the latent modality structures, by considering variations in intra- and inter-modalities. We visualize these designs in~\figref{fig:three_regularizer}. 
We first introduce the basic contrastive learning framework that we develop our methods on.
Following previous work~\cite{jiali2021codis,radford2021clip}, we adopt the multi-modal training framework with contrastive loss, which uses both cross-modal and in-modal contrastive loss, \ie,$\mathcal{L}_{\text{Con}} = \frac{1}{4}(\mathcal{L}_{\text{V2T}} + \mathcal{L}_{\text{T2V}} + \mathcal{L}_{\text{V2V}} + \mathcal{L}_{\text{T2T}} )$ with:
\begin{align*}
    \mathcal{L}_{\text{V2T}}=&-\frac{1}{N}\sum_{j=1}^N \log\frac{e^{\product{z_{V_j},z_{T_j}}/\tau}}{\sum_{k=1}^N e^{\product{z_{V_j},z_{T_k}}/\tau}}\\
    \mathcal{L}_{\text{V2V}}=&-\frac{1}{N}\sum_{j=1}^{N} \log\frac{e^{\product{z_{V_j},z^\text{a}_{V_j}}/\tau}}{\sum_{k=1}^N e^{\product{z_{V_j},z_{V_k}}/\tau}}\\
\end{align*}
where $N$ denotes the batch size; $z_{V_j}$ denote the feature of the $j$-th image in the mini-batch, with its augmentation $z^\text{a}_{V_j}$ and corresponding text feature $z_{T_j}$. The remaining losses~($\mathcal{L}_{\text{T2V}}$, $\mathcal{L}_\text{{T2T}}$) are defined in the same way by switching between text modality ($T$) and image modality ($V$).


\subsection{Intra-modality Regularization via Deep Feature Separation}
This subsection aims to construct intra-modality structures to regularize in-modality representations. Based on Theorem~\ref{thm:gap}, we first define two types of information, {\em modality-shared information} that is shared by all modalities, and {\em modality-independent information} that is modality-specific. Our motivation stems from our theoretical finding that exact modality matching is sub-optimal due to the loss of {\em modality-independent information}. To overcome this limitation, we propose to explicitly model the modality-independent information. We achieve this by applying the idea of feature separation~\cite{bousmalis2012domain} on multi-modal representation learning. Our basic construction is shown in Figure~\ref{fig:sep}. On top of the contrastive learning framework, we use additional projection layers to construct new features to store such information. We term these {\em independent features}, meaning that they contain modality-specific information independent of the other modality. We take extra constraints to ensure that a) independent features contain complementary information from the original features; and b) independent features are meaningful representations.

To ensure a), we constrain the features to be orthogonal to the original features by forcing their inner product to be small, \ie $\product{u,v}=0$. We define an orthogonal loss over minibatch optimization as follows:
\begin{equation*}
       \mathcal{L}_{\text{Ortho}} = \frac{1}{N}\sum_{j=1}^N(\product{z_{V_j}, z_{V_j}^{\text{i}}}^2 + \product{z_{T_j}, z_{T_j}^{\text{i}} }^2) 
\end{equation*}
where $z_{V_i}^{\text{i}}$ denote the independent feature of the $i^{th}$ image feature in the batch.

To avoid the degenerate case where the independent features are learned to be non-informative noises independent of the other modality, we further constrain that the independent features are informative. To this end, we adopt the contrastive loss and uniformity loss on the independent features, {\it i.e.}, we first adopt in-modality contrastive loss for independent text features and independent image features separately, \ie, $\mathcal{L}_{\text{Con}}^{\text{i}} =\mathcal{L}_{\text{V2V}}^{\text{i}} + \mathcal{L}_{\text{T2T}}^{\text{i}}$ with
\begin{align*}
    \mathcal{L}_{\text{V2V}}^{\text{i}}=&-\frac{1}{N}\sum_{j=1}^N \log\frac{e^{\product{z_{V_j}^{\text{i}},z_{V_j}^{\text{i~a}}}/\tau}}{\sum_{k=1}^N e^{\product{z_{V_j}^{\text{i}},z_{V_k}^{\text{i}}}/\tau}}~,
\end{align*}
and $\mathcal{L}_{\text{T2T}}^{\text{i}}$ is defined similarly. Then
we enhance the independent features with the uniformity loss~\cite{wang2020hypersphere} that maximizes the pairwise Gaussian potential~\cite{Cohn2006UniversallyOD,Bachman2019LearningRB}. Such a uniformity loss encourages the learned features to preserve maximal information: 
\vspace{-0.3cm}
\begin{align*}
        \mathcal{L}_{\text{Uni}}^{\text{i}} =& \log \frac{1}{N}\sum_{j=1}^N\sum_{k=1}^N G_t(z_{V_j}^{\text{i}},z_{V_k}^{\text{i}}) + G_t(z_{T_j}^{\text{i}},z_{T_k}^{\text{i}}),
\end{align*}
where $G_t(u,v) = e^{-t\|u-v\|^2}$ is the Gaussian potential kernel with $t=2$. In this way, we can preserve both {\em modality-shared information} and {\em modality-independent information}. 
Finally we obtain the total loss:
 $\mathcal{L}_{\text{Sep}} =\mathcal{L}_{\text{Ortho}}+ \mathcal{L}_{\text{Con}}^{\text{i}}+ \mathcal{L}_{\text{Uni}}^{\text{i}}$. 

\begin{figure*}[t]
  \centering
  \begin{subfigure}{0.45\linewidth}
    \centering
    \includegraphics[width=2.56in]{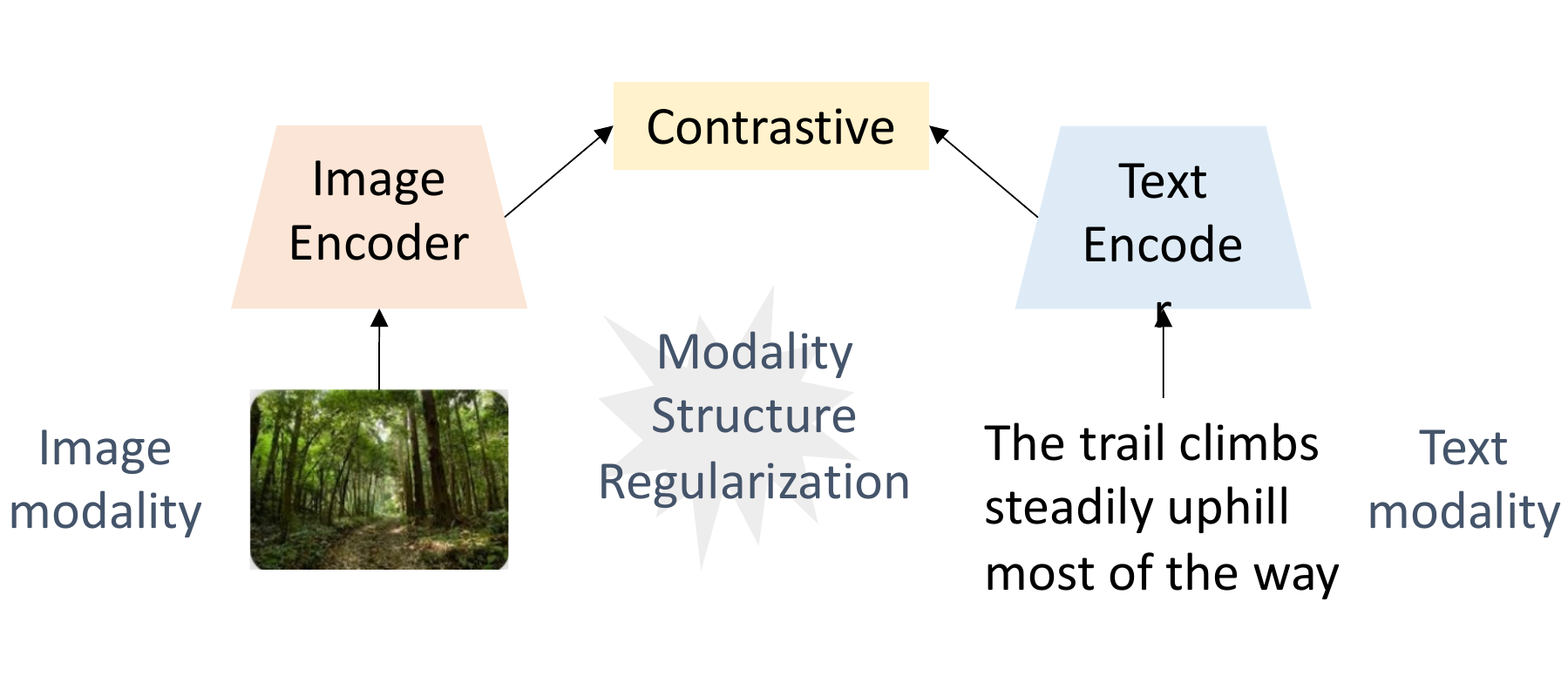}
    \vspace{-0.2cm}
    \caption{Two-tower-based models.}
    \label{fig:clip}
  \end{subfigure}
  \begin{subfigure}{0.45\linewidth}
    \centering
    \includegraphics[width=3.2in]{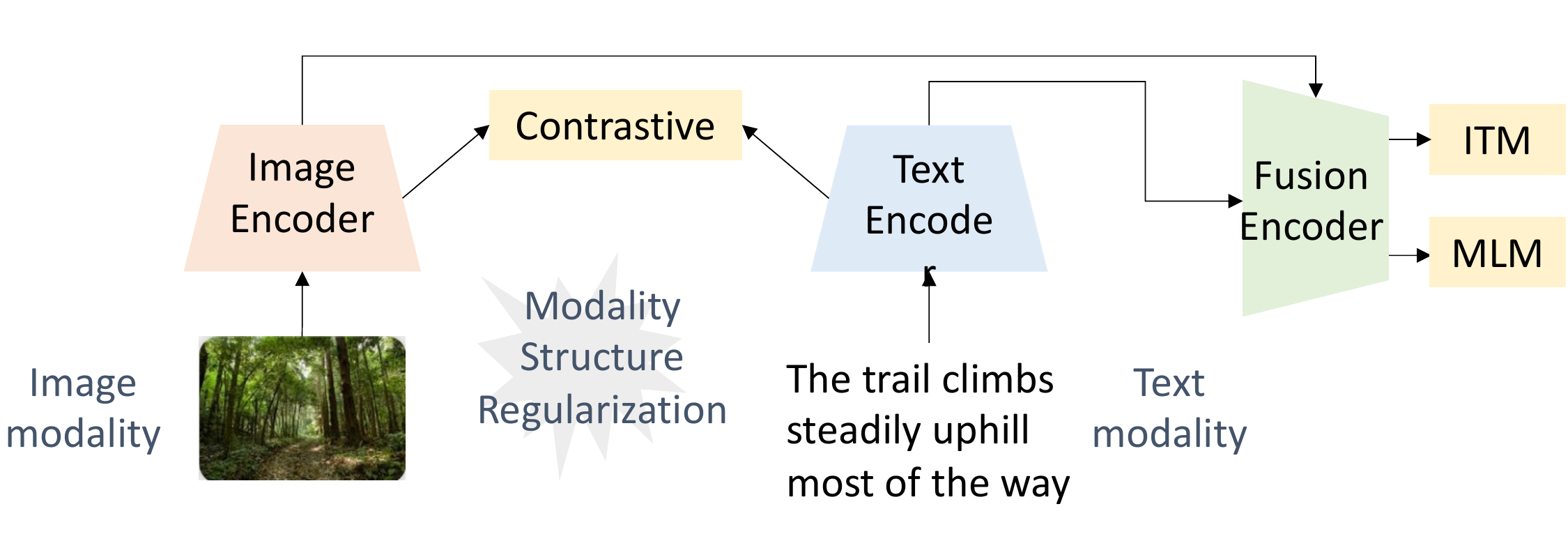}
    \vspace{-0.5cm}
    \caption{Fusion-based models.}
    \label{fig:albef}
  \end{subfigure}
  \vspace{-0.3cm}
  \caption{Illustration of two-tower-based models~(\eg CLIP) and fusion-based models (\eg ALBEF). Our latent modality regularization can be applied to both type of models at their feature level.}
  \label{fig:clip_albef}
  \vspace{-0.3cm}
\end{figure*}
\subsection{Inter-modality Regularization via Brownian Bridge}
Next, we consider regularizing inter-modality structures.
With the existence of modality gap, a natural idea is to constrain paired modality features in some subspace so that they are better separated from other feature pairs. To this end, we propose to construct a latent structure to explicitly guide the transition from the image modality to the associated text modality. Such a modality transition can be seamlessly modeled by the so-called Brownian bridge~\cite{Wang2022brownian,Mansuy2008AspectsOB}, which is a special type of Brownian motion with constraints that define stochastic paths (called bridges) between a pair of fixed starting and ending points (corresponding to the two modalities in our setting). Our basic construction is illustrated in Figure~\ref{fig:br}. 

To formulate this, given two random variables $(Z_V, Z_T)$ of image-text feature pairs, we denote the feature of augmented image as $Z_V^{\text{a}}$. We define a stochastic path such that $Z_V^{\text{a}}$ is constrained to stay on the path between $Z_V$ and $Z_T$. From the property of Brownian bridge, this endows a conditional Gaussian distribution of the form:
\vspace{-0.3cm}
\begin{equation}\label{eq:bridge}
p(Z_V^{\text{a}}|Z_V, Z_T) = \mathcal{N}(Z_V^{\text{a}}; \mu(Z_V, Z_T, t), t(1-t)\mathbf{I})  
\end{equation}
where $t\in[0, 1]$ is a hyperparameter, which can be randomly sampled at each time or fixed to a pre-defined value (we fix it to 0.25 in our experiments for simplicity); $\mu(Z_V, Z_T, t) \triangleq \frac{tZ_V + (1-t)Z_T}{\|tZ_V + (1-t)Z_T\|}$, and the normalizer is applied in order to constrain the mean to lie on the hyper-sphere feature space. Based on the maximal likelihood principle, to fit the model, we can simply align the $Z_V^{\text{a}}$ with the mean of the Brownian bridge in \eqref{eq:bridge}. When applying stochastic optimization, this ends up with optimizing the following objective at each time over a mini-batch:
\vspace{-0.1cm}
\begin{align*}
    \mathcal{L}_{\text{Br}} &= \frac{1}{N}\sum_{j=1}^N\|z_{V_j}^{\text{a}} - \mu(z_{V_j}, z_{T_j}, t)\|^2 \\
    &= \frac{1}{N}\sum_{j=1}^N\frac{t \product{z_{V_j}, z_{V_j}^{\text{a}}} + (1-t)\product{z_{T_j}, z_{V_j}^{\text{a}}}}{t^2+(1-t)^2 + 2t(1-t)\product{z_{V_j}, z_{T_j}}}
\end{align*}

\subsection{Intra-Inter Regularization via Geometric Consistency}
In the previous subsections, we consider either intra- or inter-modality structures between the two modalities. Is it possible to relate these two types of relationships together? In this subsection, we aim to design a general regularizer that considers both intra- and inter-modality structures. We achieve this goal by enforcing geometric symmetry within and between modality representations and their augmentations. Specifically, we generalize the idea in CyCLIP~\cite{Goel2022CyCLIPCC} so that it also includes geometric consistency for the augmented features, which is demonstrated in the experiments to achieve significant improvement. 

Specifically, we apply two types of geometric consistency losses that achieve symmetry in the following settings. 
First, we enforce geometric consistency among the original modality features, by optimizing the similarity between the mismatched image and text pairs, and the similarity between image pairs and text pairs. As shown in Figure~\ref{fig:gc}, we achieve this by encouraging the geometric consistency such that $\product{z_{V_1}, z_{T_2}}\sim\product{z_{V_2}, z_{T_1}}$ and $\product{z_{V_1}, z_{V_2}}\sim\product{z_{T_1}, z_{T_2}}$, where $a \sim b$ means $a$ is close to $b$ in some sense (defined below). 
We define the following geometric consistency objective over mini-batch: 
\vspace{-0.3cm}
\begin{align*}
    \mathcal{L}_\text{GC}=&\frac{1}{N}\sum_{j=1}^N\sum_{k= 1}^N[(\product{z_{V_j}, z_{T_k}}-\product{z_{V_k}, z_{T_j}})^2 \\
    +&(\product{z_{V_j}, z_{V_k}}-\product{z_{T_j}, z_{T_k}})^2]
\end{align*}
Second, we optimize the geometric consistency of augmented features. As shown in~\figref{fig:gc} we optimize geometric symmetry between feature pairs and augmented feature pairs in the text and image space.
The following objective is used to enforce this goal:
\begin{align*}
    &\mathcal{L}_\text{GC}^\text{a}=\frac{1}{N}\sum_{j=1}^N\sum_{k=1}^N[(\product{z_{V_j}, z_{V_k}}-\product{z_{V_j}^\text{a}, z_{V_k}^\text{a}})^2 \\
    \hspace{-0.3cm}+&(\product{z_{T_j}, z_{T_k}}-\product{z_{T_j}^\text{a}, z_{T_k}^\text{a}})^2] 
    +\frac{1}{N}\sum_{j=1}^N(\product{z_{V_j}, z_{T_j}}-\product{z_{V_j}^\text{a}, z_{T_j}^\text{a}})^2
\end{align*}
Overall, the total combination of geometric consistency loss can be written as: $ \mathcal{L}_\text{GC} +\mathcal{L}_\text{GC}^\text{a}$.

\paragraph{Final Loss}
We can now define a final loss by combining the standard contrastive loss with one or several of our proposed modality regularization losses. The effect of each regularization could be task-dependent, \ie certain task could benefit more from certain regularization, which we will show comprehensively in the next section. 

\section{Experiments}
\begin{table*}[t]
\small
    \centering
    \caption{Zero-shot TopK classification accuracy~(\%) on CIFAR10, CIFAR100 and ImageNet1K.}
    \vspace{-0.3cm}
    \label{tab:clip_zero}
    \setlength{\tabcolsep}{5pt}
    \resizebox{0.64\linewidth}{!}{
    \begin{tabular}{lccc|ccc|ccc}
    \specialrule{.15em}{.05em}{.05em}
      \multirow{2}{*}{Method} & \multicolumn{3}{c}{CIFAR10} & \multicolumn{3}{c}{CIFAR100}&\multicolumn{3}{c}{ImageNet1K}\\
        & Top1 & Top3 & Top5& Top1 & Top3 & Top5& Top1 & Top3 & Top5\\
      \hline
      CLIP~\cite{radford2021clip} &44.95 & 72.58 & 88.3 &15.05 &29.51 & 37.53 & 16.72 & 28.61 & 34.38\\
      CyCLIP~\cite{Goel2022CyCLIPCC} & 43.22 &71.43 & 83.22 & 15.09 & 27.39 & 34.35 & 17.77 & 30.06 & 36.20\\
      \cmidrule{2-10}
      OURS$_\text{Sep}$ & 46.61 & \bf81.21 & \bf92.44 & 19.37 & 36.66 & 46.26 & 20.21 & 33.25 & 39.60\\
      OURS$_\text{Br}$ & 43.15 & 72.77 &86.72 & 14.22 & 26.46 & 33.28 & \bf20.45 & \bf33.56 & 39.28 \\
      OURS$_\text{GC}$ & \bf56.36 & 80.47 & 90.27 & \bf22.70 & \bf41.66 & \bf51.78 &20.25 & 33.50 & \bf39.91\\
      
    \specialrule{.15em}{.05em}{.05em}     
    \end{tabular}
    }
    \vspace{-0.2cm}
\end{table*}
\begin{table*}[t]
\small
    \centering
    \caption{Zero-shot TopK classification accuracy~(\%) on Natural Distribution Shifts.}
    \vspace{-0.3cm}
    \label{tab:clip_shift}
    \setlength{\tabcolsep}{5pt}
    \resizebox{0.85\linewidth}{!}{
    \begin{tabular}{lccc|ccc|ccc|ccc}
    \specialrule{.15em}{.05em}{.05em}
      \multirow{2}{*}{Method} & \multicolumn{3}{c}{ImageNetV2} & \multicolumn{3}{c}{ImageNetSketch}&\multicolumn{3}{c}{ImageNet-A}&\multicolumn{3}{c}{ImageNet-R}\\
        & Top1 & Top3 & Top5& Top1 & Top3 & Top5& Top1 & Top3 & Top5& Top1 & Top3 & Top5\\
      \hline
      CLIP~\cite{radford2021clip} & 14.11 & 25.76 & 31.80 & 8.61 & 16.47 & 21.13 & 2.81 & 7.31 & 11.32 & 19.07 & 31.99 & 39.03\\
      CyCLIP~\cite{Goel2022CyCLIPCC} & 15.25 & 26.59 & 32.15 & 8.30 & 16.18 & 20.77 & 3.27 & 8.45 & 13.07 &19.85 &33.35 & 40.35\\
      \cmidrule{2-13}
      OURS$_\text{Sep}$ & 16.78 & 28.97 & 35.68 & 9.22 & 17.86 & 23.00 & 3.45 & 9.88 & 15.81 & 22.06 & 35.65 & 43.01\\
      OURS$_\text{Br}$ & 17.02 & 29.39 & 35.53 & 10.34 & 18.39 & 23.05 & 3.01 & 7.50 & 11.45 & 20.40 & 32.43 & 38.45\\
      OURS$_\text{GC}$ & \bf17.37 & \bf29.84 & \bf36.65 & \bf10.90 & \bf20.77 & \bf26.11 & \bf3.87 & \bf11.36 & \bf16.76 & \bf23.85 & \bf37.90 & \bf45.03\\
      
    \specialrule{.15em}{.05em}{.05em}     
    \end{tabular}
    }
    \vspace{-0.2cm}
\end{table*}
\begin{table*}[!t]
\small
    \centering
    \caption{Linear probing Top1 classification accuracy~(\%) on visual benchmarks.}
    \vspace{-0.3cm}
    \label{tab:clip_linear}
    \setlength{\tabcolsep}{5pt}
    \resizebox{\linewidth}{!}{
    \begin{tabular}{lcccccccccccccc|c}
    \specialrule{.15em}{.05em}{.05em}
        \rule{0pt}{6ex} 
        & \rot{Caltech101} & \rot{SVHN} &\rot{STL10} & \rot{CIFAR10} & \rot{CIFAR100} & \rot{DTD} & \rot{FGVCAircraft}  & \rot{OxfordPets} & \rot{SST2} & \rot{Food101} & \rot{GTSRB} & \rot{StanfordCars} & \rot{Flowers102} &  \rot{ImageNet1K} & \rot{Average}\\
      \hline
      CLIP~\cite{radford2021clip} & 78.57 & 57.07 &87.22& 79.74 & 56.36 & 59.84 & 37.17 & 59.66 & 53.98 & 58.11 & 74.21 & 23.96 & 76.66 & 52.10 & 61.05\\
      CyCLIP~\cite{Goel2022CyCLIPCC} & 77.86 & 54.29 &87.61& 77.53 & 54.23 & 58.19 & 33.00 & 62.63 & 54.81 &  60.82 & 72.95 & 23.36 & 72.89 & 52.83 & 60.14\\
      \cmidrule{2-15}
      OURS$_\text{Sep}$ & \bf84.45 & \bf69.82 & 90.96 & \bf81.51 & \bf61.19 & \bf67.50 & \bf41.70 & \bf67.16 & 54.26 & \bf 63.08 & \bf 82.35 & \bf31.76 & \bf81.69 & \bf56.73 & \bf66.73\\
      OURS$_\text{Br}$ & 82.18 & 57.46 & 90.69 & 79.42 & 57.72 & 64.84 & 34.74 & 65.71 & 54.04 & 60.52 & 73.61 & 26.50 & 78.44 & 53.87 &62.84\\
      OURS$_\text{GC}$ &  83.23 & 63.58 & \bf91.31 &80.92 & 58.89 & 65.43 & 34.83 & 64.51 & \bf55.19 & 60.80 & 76.84 & 26.95 & 78.76 & 54.96 &  64.01\\
      
    \specialrule{.15em}{.05em}{.05em}     
    \end{tabular}
    }
    \vspace{-0.5cm}
\end{table*}
Our proposed methods are general purposed. Thus, we choose to evaluate them with two popular multi-modal representation frameworks: the two-tower based models ({\it e.g}, CLIP) and the fusion based models ({\it e.g.}, ALBEF), as illustrated  in~\figref{fig:clip_albef}. Note that in CLIP, text inputs are augmented with EDA~\cite{wei-zou-2019-eda}, and image inputs are augmented with random augmentation such as flipping and cropping. In ALBEF, augmented features are obtained with additional momentum encoders.

\subsection{Two-Tower-based Models}

\label{sec:exp_clip}
For this set of experiments, we adopt the CLIP-based models, where two separate encoders are trained to align features from the image and text modalities. To regularize latent modality structures, our regularization losses are separately applied along with the standard contrastive loss for pre-training\footnote{We will combine all the proposed regularizers for evaluation in experiments with the fusion-based models.}. We then evaluate on standard benchmarks.

\myparagraph{Setup:}
Our CLIP model adopts ResNet-50~\cite{He2016resnet} as the image encoder and BERT~\cite{devlin2018bert} as the text encoder. 
We adopt the official code from CyCLIP to incorporate our regularizations, as well as to reproduce the baselines. Our reproduced CLIP results are consistent with the recent works~\cite{mu2021slip,gao2021clipadp}, although they are slightly lower than reported in the original CLIP paper. The reason could be that the number of GPUs we use is different and we provide details in~\cref{sec:app_implementation_clip}. For both baselines, we can reproduce better performance on linear probing but slightly under-perform on zero-shot transfer, which we consider reasonable. Note that all methods are under the same codebase and same hyper-parameter setting, thus the comparisons are fair.

\myparagraph{Pre-training:} We follow the protocol of previous works to pre-train the model with the CC3M~\cite{sharma2018conceptual} dataset, which contains 3M unique images and 4M image-text pairs. 




\subsubsection{Zero-Shot Transfer Learning Evaluation}
We perform zero-shot transfer on standard image classification tasks, with the CIFAR10, CIFAR100~\cite{Krizhevsky2009cifar} and ImageNet1K~\cite{Russakovsky2015ImageNetLS} datasets. We use the standard evaluation strategy of prompt engineering. For each dataset, we construct the text prompts using the name of the class, \eg "a photo of the \texttt{[class name]}". For each class, we obtain the normalized class text embedding. During the evaluation, the class with the highest similarity score to the image embedding is predicted to be the label. Following previous works, we report Top-K classification accuracy with $K=1, 3, 5$. 

As shown in ~\tabref{tab:clip_zero}, our method significantly outperforms CLIP and CyCLIP on all three datasets, demonstrating the importance of latent modality structures. It is also interesting to see the differences our three regularizers perform in different datasets, {\it i.e.}, the feature-separation regularizer performs best in CIFAR10, while Brownian bridge regularizer performs best on ImageNet1K, and geometry consistency regularizer performs the best on CIFAR100.

\subsubsection{Natural Distribution Shift Evaluation}
We further evaluate variants~\cite{Recht2019imagenetv2,wang2019imagenetsketch,Hendrycks2021imageneta,Hendrycks2021imagenetr} of ImageNet1K dataset with shifted distributions. These datasets contain sketches, cartoons and adversarial generated images.
As shown in~\tabref{tab:clip_shift}, all methods suffer from performance degradation on natural distribution shift benchmarks compared to the performance on original ImageNet1K in~\tabref{tab:clip_zero}. Nevertheless, our method consistently outperforms the baselines on all benchmarks. In contrast to the other experiments, our geometric consistency regularization performs the best on all the benchmarks. 

\subsubsection{Linear Probing Evaluation}
We demonstrate better latent structure can also benefit downstream tasks with in-domain supervision. We evaluate this on linear probing tasks by fitting a linear classifier with in-domain supervision using the learned visual encoder. In total, we evaluate on 14 standard benchmarks~\cite{Krizhevsky2009cifar,Russakovsky2015ImageNetLS,li2006caltech101,Netzer2011svhn,Coates2011stl10,cimpoi14dtd,maji13aircraft,parkhi12pets,socher2013sst,bossard14food,Houben2013gtsrb,KrauseStarkDengFei-Fei_3DRR2013cars,Nilsback2008flowers}.
As shown in ~\tabref{tab:clip_linear}, all our methods outperform the baselines on all benchmarks by large margins. Remarkably, our deep feature separation regularization performs particularly well on this task. We believe this is partially because such regularization can learn to preserve more information that could be useful with extra in-domain supervision.
\subsection{Fusion-based Models}
\label{sec:exp_albef}
\begin{table}[t]
\small
    \centering
    \caption{Downstream tasks performance on fusion-based models.}
    \vspace{-0.3cm}
    \label{tab:albef_down}
    \setlength{\tabcolsep}{4pt}
    \resizebox{\linewidth}{!}{
    \begin{tabular}{lcccccc}
    \specialrule{.15em}{.05em}{.05em}
      \multirow{2}{*}{Method} & \multicolumn{2}{c}{VQA} & \multicolumn{2}{c}{NLVR$^2$} & \multicolumn{2}{c}{SNLI-VE}\\
      & test-dev & test-std & dev & test-P & val & test\\
      \hline
      ImageBERT~\cite{Li2019VisualBERTAS} & 70.80 & 71.00 & 67.40 & 67.00 & - & -\\
      LXMERT~\cite{Tan2019LXMERTLC} & 72.42 & 72.54 & 74.90 & 74.50 & - & - \\
      12-in-1~\cite{Lu202012in1MV} & 73.15 & - &- 78.87 & - & 76.95 \\
      UNITER~\cite{chen2020uniter} & 72.70 & 72.91 & 77.81 & 77.85 & 78.59 & 78.28 \\
      OSCAR~\cite{li2020oscar} & 73.16 & 73.44 & 78.07 & 78.36 & - & - \\
      VILLA~\cite{Gan2020LargeScaleAT}& 73.59 & 73.67 & 78.39 & 79.30 & 79.47 & 79.03\\
      ViLT~\cite{Kim2021ViLTVT} & 70.94 &- & 75.24 & 76.21 & - & -\\
      ViCHA~\cite{Shukor2022EfficientVP} & 73.55 & - & 78.14 & 77.00 & 79.20 & 78.65 \\
      ALBEF~\cite{li2021albef} &73.38&73.52&78.36&79.54 & 79.69 & 79.91\\
      CODIS~\cite{jiali2021codis}& 73.15&73.29 &78.58 & \bf79.92 & 79.45 & 80.13\\
      \cmidrule{2-7}
      OURS$_\text{Sep}$ & 73.52 & 73.59 & \bf79.05 & 79.76 & \bf79.95 & 79.61\\
      OURS$_\text{Br}$ & \bf74.26&\bf74.36 & 78.70 & 79.36 & 79.86 & 79.95\\
      OURS$_\text{GC}$ & 73.90 & 73.87 & 78.96 & 79.53 & 79.82 & \bf80.16\\
      
    \specialrule{.15em}{.05em}{.05em}     
    \end{tabular}
    }
\end{table}
We next test our methods on fusion-based models. We adopt the ALBEF~\cite{li2021albef} framework, where a fusion encoder is applied to fuse the modality as shown in~\figref{fig:albef}. Such fusion-based models are known to be more powerful in learning inter-model interaction compared to simple two-tower-based models. Thus, we evaluate our methods on various vision-language downstream tasks including VQA~\cite{goyal2017vqa}, NLVR$^2$~\cite{suhr2019nlvr2}, SNLI-VE~\cite{bowman2015snli}. Here we incorporate all three regularizations for these tasks. We additionally provide ablation study on smaller scale experiments.
\paragraph{Setup}
We use ViT-B/16 as our vision encoder and 12-layer BERT$_{base}$ as the text encoder. Note the first 6 layers of BERT$_{base}$ are used purely as the text encoder and the remaining are used as fusion encoder. We reproduced ALBEF and CODIS results for fair comparisons. All experiments we run are under the same codebase and hyper-parameter settings. The details are included in~\cref{sec:app_implementation_albef}.

\myparagraph{Pre-training:}
We follow the previous experiments protocols~\cite{li2021albef,jiali2021codis} using a union of four datasets for pre-training, which include Conceptual Captions~(CC3M)\cite{sharma2018conceptual}, Visual Genome~(VG)\cite{kris2017vg}, SBU Captions\cite{ordonez2011im2text} and COCO\cite{lin2014coco}, constituting 4M unique images and 5M image-text pairs.

\subsubsection{Vision-Language Tasks Evaluation}

\myparagraph{Visual Question Answering (VQA):} 
We fine-tune and evaluate our pre-trained model on VQA v2.0. Following~\cite{cho2021unifying,li2021albef,jiali2021codis}, we consider VQA as a generation task. During fine-tuning, we apply 6-layer transformer-based decoder to generate the answer. 
We fine-tune on the training set and evaluate on the test-dev and test-std set. The results are presented in Table~\ref{tab:albef_down}. Consistently, our method performs the best and achieves a 1\% improvement on both the test-dev and test-std sets.

\myparagraph{Natural Language for Visual Reasoning (NLVR$^2$):} 
We use the NLVR$^2$ dataset, which contains 100K texts paired with web images. To enable our model to reason over two images, we follow~\cite{li2021albef} to extend the fusion encoder with an MLP prediction head and perform additional pre-training of one epoch to prepare the fusion encoder on text-assignment task. As shown in Table~\ref{tab:albef_down}, our method achieves an improvement of 2\% on the dev set and matches the performance of SOTA on the test-P set.

\myparagraph{Visual Entailment (VE):} 
We follow~\cite{li2021albef,chen2020uniter} and consider this as a classification problem with three classes~(entailment, neutral, contradictory). Thus, we adopt an MLP prediction head on top of the fusion encoder. Again, our method is comparable to the baselines on the val set and outperforms all baselines on the test set.



We provide additional results including analysis and visualization of constructing latent structures, visualization of experimental results, as well as ablation studies in~\Cref{sec:app_ablation}.


\section{Conclusion}
\label{sec:conc}
In this paper, we investigate the latent modality structures in multi-modal representation learning. We analyze and examine the modality gap in the latent feature space and reveal that reducing modality gap to zero does not always lead to better performance. Instead we advocate that more meaningful latent features structures will benefit the downstream applications. Thus we design three regularization methods to construct meaningful latent structures. We propose to use 1) deep feature separation loss 2) brownian bridge loss 3) geometric consistency loss to improve the latent features from different perspectives. Extensive experiments on multiple vision-language tasks including image classification, linear probing, visual question answering, visual reasoning, visual entailment confirm the effectiveness and the generalizability of our proposed approach on popular contrastive representation learning frameworks.

{\small
\bibliographystyle{ieee_fullname}
\bibliography{egbib}
}

\clearpage
\appendix
\section{Proof of Theorem~\ref{thm:gap}}
\label{sec:app}
To ease the reading, we first restate Theorem~\ref{thm:gap} and then provide the proof.
\mgap*
\begin{proof}[Proof of Theorem~\ref{thm:gap}]
Consider the joint mutual information $I(Z_T, Z_V; Y)$. By the chain rule, we have the following decompositions:
\begin{align*}
    I(Z_T, Z_V; Y) &= I(Z_T; Y) + I(Z_V; Y\mid Z_T) \\
                   &= I(Z_V; Y) + I(Z_T; Y\mid Z_V).
\end{align*}    
However, since $Z_T$ and $Z_V$ are perfectly aligned, $I(Z_V; Y\mid Z_T) = I(Z_T; Y\mid Z_V) = 0$, which means $I(Z_T, Z_V; Y) = I(Z_V; Y) = I(Z_T;Y)$. On the other hand, by the celebrated data-processing inequality, we know that
\begin{equation*}
    I(Z_T; Y) \leq I(X_T; Y), \quad I(Z_V;Y)\leq I(X_V; Y).
\end{equation*}
Hence, the following chain of inequalities holds:
\begin{align*}
    I(Z_T, Z_V; Y) &= \min\{I(Z_T; Y), I(Z_V; Y)\} \\
                   &\leq \min\{I(X_T; Y), I(X_V; Y)\} \\
                   &\leq \max\{I(X_T; Y), I(X_V; Y)\} \\
                   &\leq I(X_T, X_V; Y),
\end{align*}
where the last inequality follows from the fact that the joint mutual information $I(X_T, X_V; Y)$ is at least as large as any one of $I(X_T;Y)$ and $I(X_V;Y)$. Therefore, due to the variational form of the conditional entropy, we have
\begin{align*}
& \inf_{h}\sE_p[\lce(h(Z_T, Z_V), Y)] - \inf_{h'}\sE_p[\lce(h'(X_T, X_V), Y)] \\
=&~ H(Y\mid Z_T, Z_V) - H(Y\mid X_T, X_V) \\
    =&~ I(X_T, X_V; Y) - I(Z_T, Z_V; Y) \\
    \geq&~ \max\{I(X_T; Y), I(X_V; Y)\} - \min\{I(X_T; Y), I(X_V; Y)\} \\
    =&~ \Delta_p\qedhere.
\end{align*}
\end{proof}

\begin{table}[t]
\small
    \centering
    \caption{Downstream tasks performance on fusion-based models.}
    \vspace{-0.3cm}
    \label{tab:albef_app}
    \setlength{\tabcolsep}{4pt}
    \resizebox{\linewidth}{!}{
    \begin{tabular}{lcccccc}
    \specialrule{.15em}{.05em}{.05em}
      \multirow{2}{*}{Method} & \multicolumn{2}{c}{VQA} & \multicolumn{2}{c}{NLVR$^2$} & \multicolumn{2}{c}{SNLI-VE}\\
      & test-dev & test-std & dev & test-P & val & test\\
      \hline
      ALBEF~\cite{li2021albef} &73.38&73.52&78.36&79.54 & 79.69 & 79.91\\
      CODIS~\cite{jiali2021codis}& 73.15&73.29 &78.58 & \bf79.92 & 79.45 & 80.13\\
      \cmidrule{2-7}
      OURS$_\text{All}$&  74.12 & 74.16 & \bf 80.18 & 79.80 &79.62& \bf80.23\\
      OURS$_\text{Sep}$ & 73.52 & 73.59 & 79.05 & 79.76 & \bf79.95 & 79.61\\
      OURS$_\text{Br}$ & \bf74.26&\bf74.36 & 78.70 & 79.36 & 79.86 & 79.95\\
      OURS$_\text{GC}$ & 73.90 & 73.87 & 78.96 & 79.53 & 79.82 & 80.16\\
      
    \specialrule{.15em}{.05em}{.05em}     
    \end{tabular}
    }
\end{table}

\begin{table}[t]
\small
    \centering
    \caption{Ablation study on zero-shot image-text retrieval performance on Flickr30K with model pre-trained on COCO.}
    \vspace{-0.2cm}
    \label{tab:albef_zero}
    \setlength{\tabcolsep}{5pt}
    \resizebox{\linewidth}{!}{
    \begin{tabular}{lcccccc}
    \specialrule{.15em}{.05em}{.05em}
      \multirow{2}{*}{Method}
      &\multicolumn{3}{c}{Text Retrieval} & \multicolumn{3}{c}{Image Retrieval}\\
      & R@1 & R@5 & R@10 & R@1 & R@5 & R@10 \\
      \hline
      ALBEF~\cite{li2021albef} & 58.4 & 83.2 & 89.5 & 44.5 &69.8 &78.0\\
      CODIS~\cite{jiali2021codis} & 62.7 & 87.0 & 92.3 & 49.0 & 74.1 & 82.9\\
      \cmidrule{2-7}
      OURS$_\text{Sep}$ & \bf66.0 & \bf88.2 & \bf93.9 & 50.4 & 76.2 & 83.7\\
      OURS$_\text{Br}$ & 65.4 & 88.1 & 93.1 & \bf50.8 & \bf77.1 & \bf84.4 \\
      OURS$_\text{GC}$ & 64.3 & 87.5 & 92.3 & 50.5 & 75.9 & 83.3\\
      
    \specialrule{.15em}{.05em}{.05em}     
    \end{tabular}
    }
    \vspace{-0.4cm}
\end{table}

\section{Additional Results}
\label{sec:app_ablation}
\subsection{Two-tower-based models}
\myparagraph{Visualization of constructing latent structures}
To better understand the effect of constructing latent modality structures, we visualize the effect of our method on the latent space in~\figref{fig:app_feature}. Note that all our methods achieve performance gain regardless of size of the modality gap, which complys with Section~\ref{sec:prelim} and Theorem~\ref{thm:gap}.

\myparagraph{Visualization of experimental results} To better demonstrate the effectiveness of our proposed methods, we visualize our experimental results on two-tower-based framework~(\eg CLIP) in~\figref{fig:app_bar} and~\figref{fig:app_radar}. Our methods show significant improvement on most of the tasks.

\subsection{Fusion-based models}
\myparagraph{Additional experimental results} 
We provide additional results on using all three regularizers. The results are shown in~\tabref{tab:albef_app}. While using all the regularizations together leads to performance gain, all our regularization methods improve the performance as well when used individually.

We evaluate zero-shot image-text retrieval on smaller scale experiments by pertaining on COCO and evaluate on Flickr30~\cite{Young2014flickr}. As shown in~\tabref{tab:albef_zero}, results indicate that all three regularizations improve the performance, while text retrieval benefits most from deep feature separation regularization and image retrieval task benefits most from Brownian bridge regularization.

\section{Implementation Details}
\label{sec:app_implementation}
\subsection{Two-tower based models.}
\label{sec:app_implementation_clip}
We follow the same code base and hyper-parameters setting as CyCLIP~\cite{Goel2022CyCLIPCC} except for number of GPUs. We train the model from scratch on 64 NVIDIA A100 GPUs and train for 64 epochs. Our batch size is 128 and feature dimension is 1024. We use an initial learning rate of $5e^{-4}$ with cosine scheduling. We warm-up the model for 10000 steps. We evaluate the model trained to the last epoch for our method.
\subsection{Fusion based models.}
\label{sec:app_implementation_albef}
We follow the codebase and hyper-parameter setting as~\cite{li2021albef,jiali2021codis} except for number of GPUs. We train all the models on 16 NVIDIA A100 GPUs. During the pre-train stage, we train with the pre-training tasks for 30 epochs. AdamW~\cite{Loshchilov2019DecoupledWD} optimizer is used along with weight decay of 0.02, batch size of 512, learning rate initially of 1e-5. We warm up the learning rate to 1e-4 after 1000 iterations and follow the cosine decay. The input size for pre-training task is 256 and the input sizes for downstream tasks are 384. 

\myparagraph{Reproducibility}
We follow the standard practice to fix the random seed to ensure that all our results are reproducible. All the source code will be made public upon acceptance of the paper.

\begin{figure*}[!t]
    \centering
    \includegraphics[width=\linewidth]{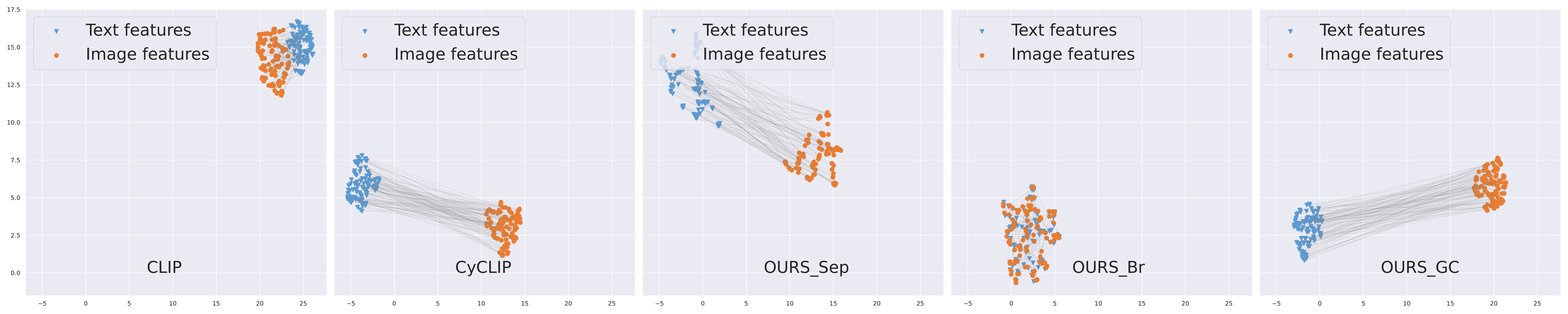}
    \vspace{-0.6cm}
    \caption{Visualization of constructing latent modality structures. Each line connects the positive image-text feature pair. }
    \vspace{-0.4cm}
    \label{fig:app_feature}
\end{figure*}

\begin{figure*}[b]
\begin{subfigure}{\linewidth}
    \centering
    \includegraphics[width=\linewidth]{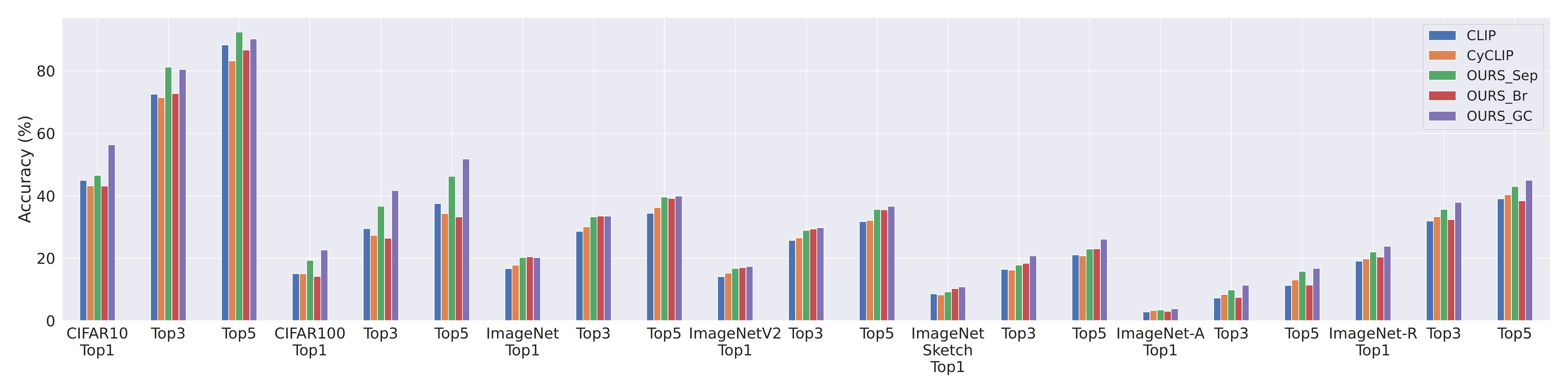}
    \vspace{-0.6cm}
    \caption{Zero-shot transfer performance.}
    \label{fig:bar_zero}
\end{subfigure}
\begin{subfigure}{\linewidth}
    \centering
    \includegraphics[width=\linewidth]{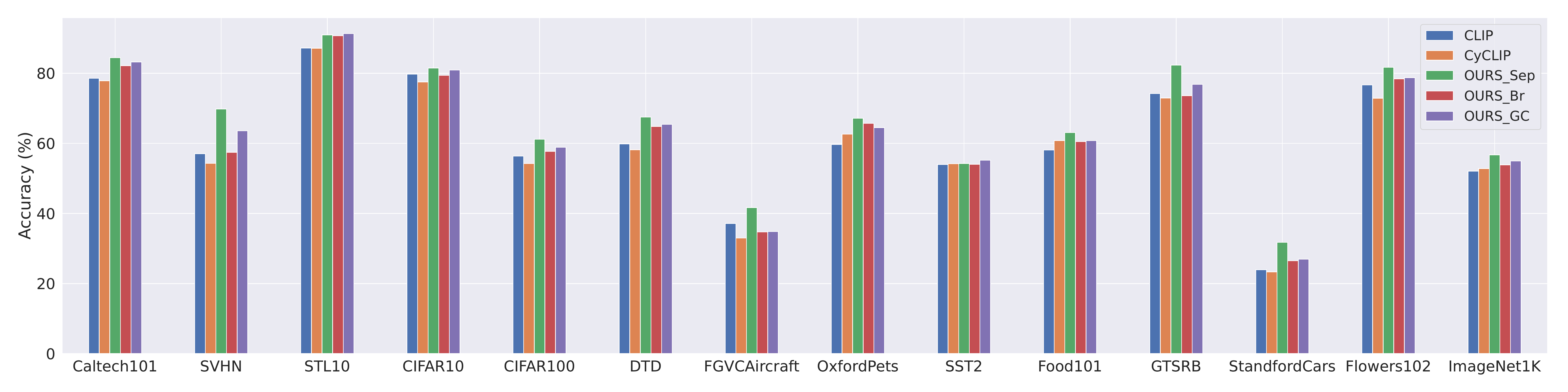}
    \vspace{-0.6cm}
    \caption{Linear-probing performance.}
    \label{fig:bar_linear}
\end{subfigure}
\vspace{-0.5cm}
\caption{Visualization of Two-tower-based methods~(\eg CLIP) performance. Each color represents a different approach.}
\label{fig:app_bar}
\end{figure*}

\begin{figure*}[b]
  \centering
  \begin{subfigure}{0.45\linewidth}
    \centering
    \includegraphics[width=3in]{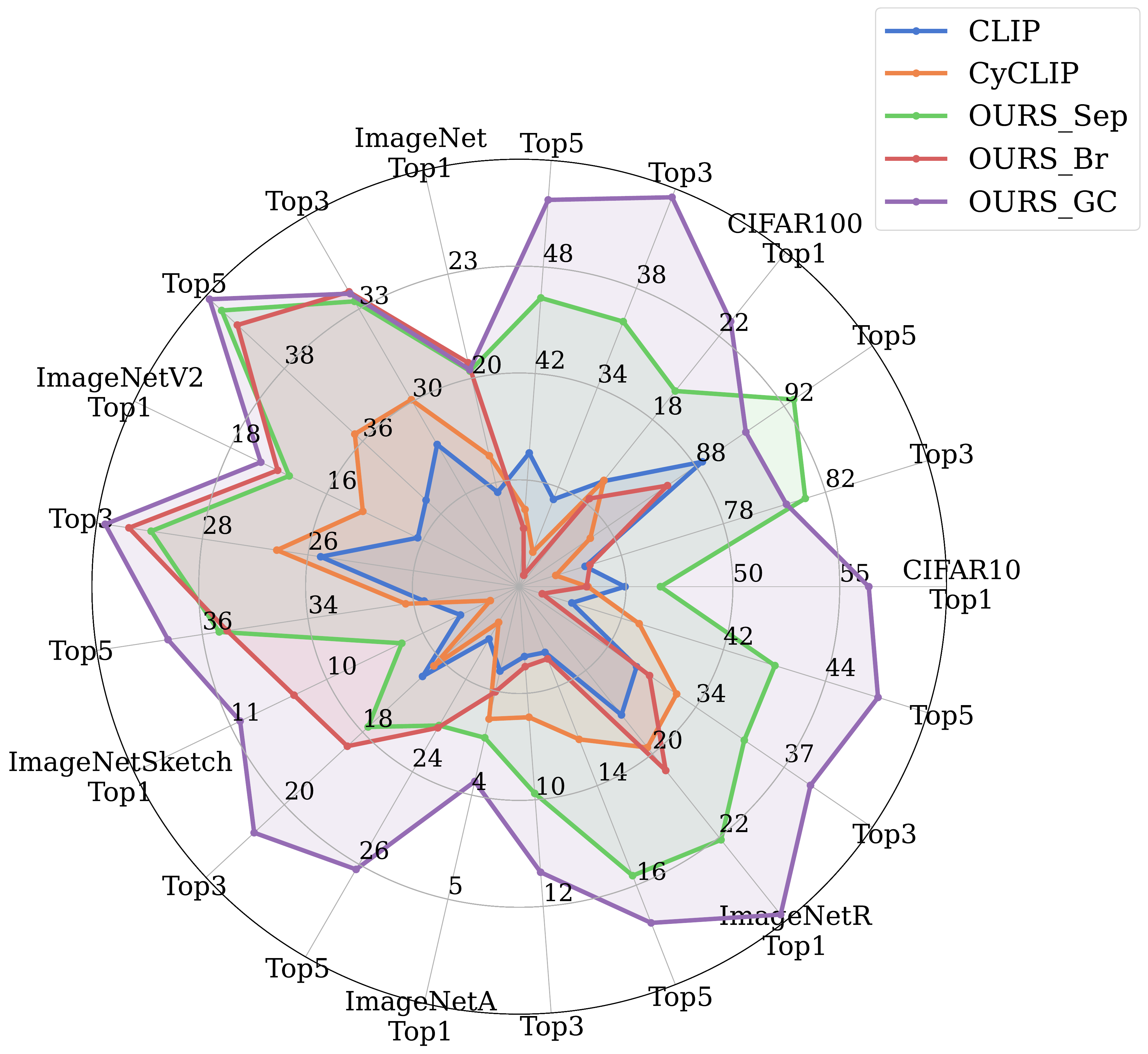}
    \caption{Zero-shot Performance.}
    \label{fig:clip}
  \end{subfigure}
  \begin{subfigure}{0.45\linewidth}
    \centering
    \includegraphics[width=3in]{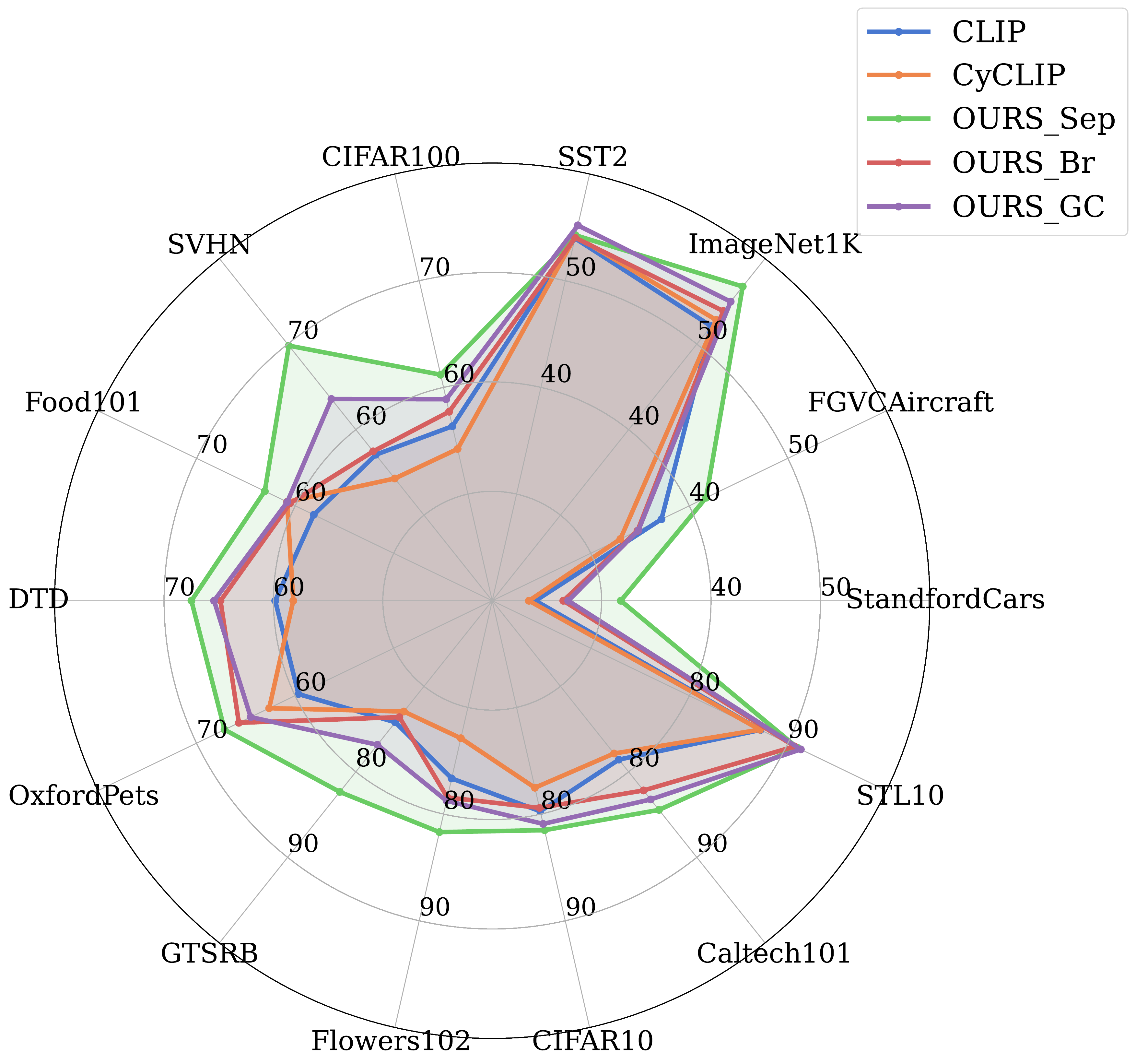}
    \caption{Linear Probing Performance.}
    \label{fig:albef}
  \end{subfigure}
  \vspace{-0.2cm}
  \caption{Visualization of Two-tower-based methods~(\eg CLIP) performance. Each axis represents the performance on a dataset with a certain metric. Each color represents different approach. The larger area that one approach covers, the better overall performance.}
  \label{fig:app_radar}
  \vspace{-0.3cm}
\end{figure*}


\end{document}